\newcommand{\type}{report}
\newcommand{\verbosity}{short}
\newcommand{\mode}{submission}
\newcommand{\apx}{proofs}
\newcommand{\numberlevel}{}
\newcommand{\counterlevel}{same}
\newcommand{\colorthm}{true}
\newcommand{\usenatbib}{false}
\newcommand{\bibfile}{../library.bib}
\newif\ifsamecounter\ifdefstring{\counterlevel}{same}{\samecountertrue}{\samecounterfalse}
\newif\ifnatbib\ifdefstring{\usenatbib}{true}{\natbibtrue}{\natbibfalse}
\newif\iflong\newif\ifshort\ifdefstring{\verbosity}{long}{\longtrue\shortfalse}{\longfalse\shorttrue}
\ifdefstring{\verbosity}{short}{\longfalse\shorttrue}{\longtrue\shortfalse}
\newif\iffinal\ifdefstring{\mode}{final}{\finaltrue}{\finalfalse}
\newif\ifdraft\ifdefstring{\mode}{draft}{\drafttrue}{\draftfalse}
\newif\ifreview\ifdefstring{\mode}{review}{\reviewtrue}{\reviewfalse}
\newif\ifsubmission\ifdefstring{\mode}{submission}{\submissiontrue}{\submissionfalse}
\newcommand{\natbibstylename}{unsrtnat}
\newcommand{\biblatexstylename}{alphabetic}
\newcommand{\contact}[1]{}
\newcommand{\affiliation}[1]{\\#1}
\ifdefstring{\type}{conference}{%

\documentclass[letterpaper]{article} 
\usepackage{aaai22}  
\usepackage{times}  
\usepackage{helvet}  
\usepackage{courier}  
\usepackage[hyphens]{url}  
\usepackage{graphicx} 
\urlstyle{rm} 
\usepackage{natbib}  
\usepackage{caption} 
\DeclareCaptionStyle{ruled}{labelfont=normalfont,labelsep=colon,strut=off} 
\frenchspacing  
\setlength{\pdfpagewidth}{8.5in}  
\setlength{\pdfpageheight}{11in}  

\setcounter{secnumdepth}{0} 

\renewcommand{\natbibstylename}{aaai22} 


}{\ifdefstring{\type}{journal}{%

\documentclass{report} 

\renewcommand{\natbibstylename}{apalike}

}{

\documentclass[12pt]{xarticle}

\usepackage[a4paper,margin=1in]{geometry}

\usepackage{libertine}

\emergencystretch 1em

}}
\algnewcommand\Input{\item[{\textbf{Input:}}]}
\algnewcommand\Output{\item[{\textbf{Output:}}]}
\let\OrgBar\|
\ifdraft\usepackage{showkeys}\fi%
\ifdefempty{\apx}{inline}{append}
\newcommand*\ifcounter[1]{%
    \ifcsname c@#1\endcsname
    \expandafter\@firstoftwo
    \else
    \expandafter\@secondoftwo
    \fi
}
    \ifdefstring{\colorthm}{true}{
        \tcolorboxenvironment{#1}{}
    }{}
\ifdefempty{\apx}{
    \excludeversion{proofsketch}
    \excludeversion{inlineproof}
}{}
\crefname{subequation}{case}{cases}
\Crefname{subequation}{Case}{Cases}
\tikzset{
    >=stealth',
    roundbox/.style={
        rectangle,
        rounded corners,
        draw=black, very thick,
        text width=6.5em,
        minimum height=2em,
        text centered},
    thickarrow/.style={
        ->,
        thick,
        shorten <=2pt,
        shorten >=2pt,}
}
\tikzset{
    node distance = 7mm and -3mm,
    innernode/.style = {draw=black, thick, fill=gray!30,
        minimum width=2cm, minimum height=0.5cm,
        align=center},
    outernode/.style = {draw=black, thick, rounded corners, fill=none,
        minimum width=1cm, minimum height=0.5cm,
        align=center, inner sep=0.5cm},
    endpoint/.style={draw,circle,
        fill=gray, inner sep=0pt, minimum width=4pt},
    arrow/.style={->,thick,rounded corners},
    point/.style={circle,inner sep=0pt,minimum size=2pt,fill=black},
    skip loop/.style={to path={-- ++(#1,0) |- (\tikztotarget)}},
    every path/.style = {draw, -latex}
}
\title{
    Reducing Planning Complexity of General Reinforcement Learning with Non-Markovian Abstractions%
    \iflong
    \thanks{A shorter version appeared in the proceedings of the AISTATS 2014 conference \cite{Hutter2016}.}
    \fi
}
\ifdefstring{\type}{conference}{ 

\author {
     Sultan J. Majeed,\textsuperscript{\rm 1}
     Marcus Hutter \textsuperscript{\rm 2}
}
\affiliations {
    \textsuperscript{\rm 1} Research School of Computer Science, ANU\\
    \textsuperscript{\rm 2} Google DeepMind \& Research School of Computer Science, ANU\\
    sultan.pk, hutter1.net
}

}{

\author{
    Sultan J. Majeed
    \contact{sultan.pk} \affiliation{Research School of Computer Science, ANU}
    \and
    Marcus Hutter
    \contact{hutter1.net} \affiliation{Google DeepMind \& Research School of Computer Science, ANU}
}

}
\begin{document}


\ifdraft
    \pagestyle{empty}
    \newpage
    \listoftodos[Todo \& Notes]
    \newpage
    \pagestyle{plain}
    \setcounter{page}{1}
\fi


\maketitle



\begin{abstract}
    The field of General Reinforcement Learning (GRL) formulates the problem of sequential decision-making from ground up. The history of interaction constitutes a ``ground'' state of the system, which never repeats. On the one hand, this generality allows GRL to model almost every domain possible, e.g.\ Bandits, MDPs, POMDPs, PSRs, and history-based environments. On the other hand, in general, the near-optimal policies in GRL are functions of complete history, which hinders not only learning but also planning in GRL. The usual way around for the planning part is that the agent is given a Markovian abstraction of the underlying process. So, it can use any MDP planning algorithm to find a near-optimal policy. The Extreme State Aggregation (ESA) framework has extended this idea to non-Markovian abstractions without compromising on the possibility of planning through a (surrogate) MDP. A distinguishing feature of ESA is that it proves an upper bound of $O\left(\varepsilon^{-A} \cdot (1-\gamma)^{-2A}\right)$ on the number of states required for the surrogate MDP (where $A$ is the number of actions, $\gamma$  is the discount-factor, and $\varepsilon$ is the optimality-gap) which holds \emph{uniformly} for \emph{all} domains. While the possibility of a universal bound is quite remarkable, we show that this bound is very loose. We propose a novel non-MDP abstraction which allows for a much better upper bound of $O\left(\varepsilon^{-1} \cdot (1-\gamma)^{-2} \cdot A \cdot 2^{A}\right)$. Furthermore, we show that this bound can be improved further to $O\left(\varepsilon^{-1} \cdot (1-\gamma)^{-2} \cdot \log^3 A \right)$ by using an action-sequentialization method.
\end{abstract}


\section{Introduction}

Standard Reinforcement Learning (RL), traditionally, models the controlled domain as a finite-state Markov Decision Process (MDP) \cite{Sutton2018}. In an MDP, the most recent observation is a sufficient statistic of the past. However, real-world tasks are more complex and inherently non-Markovian.

\begin{quote}
    ``Markov decision tasks are an ideal. Non-Markov tasks are the norm. They are as
    ubiquitous as uncertainty itself.'' \cite{Lin1992}
\end{quote}

So, if we try to naively model reality (i.e.\ any real-world problem of interest) as an MDP, we will end up with a huge, possibly infinite, number of states, e.g.\ a ``belief'' state MDP \cite{Kaelbling1996}. Note that we will need some type of ``state estimation'' function to extract an MDP state from raw observations \cite{Powell2011}. In this work, we call (a generalized variant of) such state estimation functions \emph{abstraction} maps.

The Partially-Observable MDP (POMDP) \cite{Kaelbling1996} and Predictive State Representation (PSR) \cite{Littman2002} model classes are some of the natural and well-known extensions of MDPs. They allow ``easy'' modeling of non-Markovian nature of an environment. These model classes do provide compact models, but they can have a huge planning complexity, i.e.\ the space and time required to find the optimal policy (the ``best'' future course of actions) \cite{Papadimitriou1987,James2004}. Learning in POMDPs and PSRs is even harder and less understood. 

The problem of General Reinforcement Learning (GRL) starts from the other extreme of making no assumptions about the nature of the domain \cite{Lattimore2013}. Hence, it can comprehend any model class considered in sequential decision-making literature, e.g.\ Bandits (the problems which can be modeled with a single state MDP) \cite{Lattimore2020}, MDPs, POMDPs and PSRs. The GRL framework models the environment as a \emph{history-based} domain, where the future observations can depend on the complete past history of the process \cite{Hutter2000}. In its pure form, the problem of GRL suffers from both $(a)$ the ``state explosion'' like Markovian models as history grows with time and $(b)$ the hard to impossible planning problem like non-Markovian models as no history ever repeats \cite{Hutter2000}.

However, by having no assumptions to start with in GRL opens up the possibility of ``specializing'' the framework by using ``exotic'' abstraction maps (which map any history to a state) including but not limited to POMDPs and PSRs \cite{Hutter2016,Majeed2019}. Importantly, it is possible to show that there exist abstraction maps which can have the best of both Markovian and non-Markovian model classes. These maps can have ``reasonably'' sized (non-Markovian) state-space to model \emph{any} problem and still allow the agent to use MDP style planning (and learning) methods \cite{Hutter2016,Majeed2018,Majeed2019,Majeed2020}. This work is about such powerful mapping functions.

\paradot{Related Work} We are interested in the abstractions of GRL which can guarantee an upper bound on the number of states required to plan (i.e.\ to find a near-optimal policy) in every environment. This problem has first been considered by \citet{Hutter2016} in this very setting. He provided a constructive proof (\Cref{thm:esa-bound}) for an abstraction which has the upper bound on the number of states as
\beq
O\left(\varepsilon^{-A} \cdot (1-\gamma)^{-2A}\right)
\eeq
where $A$ is the number of actions, $\gamma$  is the discount-factor, and $\varepsilon$ is the optimality-gap. We provide more details about these elements later when we formally set up the problem. Recently, \citet{Majeed2020} improved the bound using a general action-sequentialization technique to
\beq
O\left(\varepsilon^{-2} \cdot (1-\gamma)^{-6} \cdot \log^6 A \right)
\eeq

The action-sequentialization method used by \citet{Majeed2020} sequentialize the actions into a binary stream of symbols. This stream of binary input is used to ``mimic'' a ``binarized version'' of the domain.
In this work, we improve both of these results to first
\beq
O\left(\varepsilon^{-1} \cdot (1-\gamma)^{-2} \cdot A \cdot 2^{A}\right)
\eeq
without using action-sequentialization (\Cref{thm:state-bound}) and to
\beq
O\left(\varepsilon^{-1} \cdot (1-\gamma)^{-2} \cdot \log^3 A \right)
\eeq
by using action-sequentialization (\Cref{thm:bin-state-bound}). Typically, state aggregation has been studied as a special case of function approximation \cite{Abel2016,Roy2006,Hutter2019,Hauskrecht2000,Melo2008,Xu2014}. However, our work shows that it might be sufficient for all domains of interest.

\paradot{Notation} \iflong Since the theoretical nature of this paper, it is critical that we use a consistent notation in the paper for the required level of clarity. As it has already been used, \fi $O(f)$ denotes the set of functions of ``order'' $f$. We use $\SetN \coloneqq \{1, 2, 3, \dots\}$ to denote the set of natural numbers starting from $1$, where $\coloneqq$ expresses equality by definition. The set of reals is denoted by $\SetR$. For any arbitrary set $X$, we denote the set of all probability distribution on $X$ as $\Dist(X)$. For any time indexed sequence, we express $x \coloneqq x_n$ and $x' \coloneqq x_{n+1}$ for the current time-step $n$. We use juxtaposition to denote a sequence, string or vector, e.g.\ $x_1 x_2 x_3 \dots \coloneqq (x_1, x_2, x_3, \dots)$. The cardinality of a set (or a sequence) $X$ is expressed by $\abs{X}$. The expectation operator $\E^\pi[Y]$ expresses the expected value of any random variable $Y$ using the underlying (but not explicitly expressed) probability space induced by an action selection policy $\pi$ and environment $\mu$ (which is notationally suppressed). Let $s$ be a label for the set $\{x \in X : \psi(x) = s\}$ then $\psi\inv(s) \in X$ represents an \emph{arbitrary} member from the set, and throughout this work, using this non-standard notation does not lead to ill-defined expressions. We express by $f(x\|yz)$ any real-valued, parameterized function defined over $x \in X$ with parameters $y \in Y$ and $z \in Z$.

\paradot{Paper Structure} \iflong For a better navigation through the paper, we explain here the structure of \else In \fi the rest of the paper. We first formally define the problem setup, including the GRL framework, ESA and the planning problem. At the end of that section, we list the key upper bound on the number of states of the surrogate MDP provided by ESA (\Cref{thm:esa-bound}). Later, we provide the \emph{main results} of this work, which significantly improve the aforementioned upper bound (\Cref{thm:state-bound}). In the same section, we also prove that the bound can be further improved with a (rather ugly\footnote{We use the term ``ugly'' in the sense of it having a comparatively less ``intuitive'' explanation for the binarization method as compared to the ``plain'' abstraction map from histories to states. See \cite{Majeed2020} for more details on action-sequentialization.}) action-sequentialization method (\Cref{thm:bin-state-bound}).
Finally, we conclude the paper by summarizing the contributions and pointing out some key future research directions.

\section{Problem Setup}

In this section, we formally define our problem setup. As described earlier, we use GRL as the foundational framework of this work \cite{Lattimore2013}. This is a relatively detailed section. We believe that to really appreciate the main contributions of this work, one needs a clear understanding about the generality of GRL and the related concepts.

\paradot{General Reinforcement Learning}
Let there be an agent which is choosing its actions from a finite\footnote{The finite set of actions is not much of a restriction. We can almost always convert a real-valued action-space by a sufficiently fine discretization without any adverse effects.\iffalse [REF]? \fi} action-space $\A \coloneqq \{a^1, a^2, \dots, a^A \}$, where the superscript does not denote the exponent but the index of an element. The set of actions can be anything from motor actuations of a robot to any broader notion of ``abstract responses'' of a chat bot. We assume that the environment (i.e.\ our domain of interest) dispenses a precept from a countable percept-space $\OR \coloneqq \{e^1, e^2, e^3, \dots \}$. The percept can be any information received by the agent, e.g.\ a camera image, vital signs of a patient or an input prompt from the user. We do not make the usual MDP (or POMDP) assumption \cite{Sutton2018,Kaelbling1996} on the percept-space, i.e.\ $\OR$ needs not be or have a direct connection with the ``states'' of the environment. \Cref{fig:ae-loop} shows a simple GRL loop. The agent-environment interaction generates a history sequence from the following set of finite histories:
\beq
\H_n \coloneqq (\OR \times \A)^{n-1} \times \OR
\eeq
for any time-step $n \in \SetN$. We denote the set of \emph{all} finite interaction histories as
\beq
\H \coloneqq \bigcup_{m=1}^\infty \H_m
\eeq

In general, the actions of an agent in GRL are decided by a policy $\pi$ which is a complete description for the agent about how to respond at every finite history.
\beq
{\pi : \H \to \Dist(\A)}
\eeq
where $\Dist(\A)$ denotes a probability distribution over the set of actions. It is important to note that $\pi$ could be a \emph{dynamic/non-stationary} policy. The finite history dependence allows us to compactly express any non-stationary policy. Similarly, we say that the environment is ``choosing'' its percepts by the following ``transition'' function:
\beq
{\mu: \H \times \A \to \Dist(\OR)}
\eeq

And, we also assume the existence of an initial percept distribution $\mu \in \Dist(\OR)$, which is dispensed at the start of an agent-environment interaction. As with the policy $\pi$, it is easy to see that $\mu$ encompasses almost every model class considered in the literature \cite{Sutton2018,Kaelbling1996,Littman2002}. The \emph{history-based} environments is arguable the most general class of domains \cite{Hutter2000}. Importantly throughout this work, we \emph{do not} assume any more structure on (the environment) $\mu$.

\begin{figure}[!ht]
    \centering
    \resizebox{0.4\textwidth}{!}{%
        \begin{tikzpicture}
            \node [outernode] (agent) {Agent};
            \node[outernode, left=0.05\textwidth of agent] (env) {Environment};
            \node[endpoint, above= -2pt of env] (or_env) {};
            \node[endpoint, below= -2pt of env] (a_env) {};
            \node[endpoint, below= -2pt of agent] (a_agent) {};
            \node[endpoint, above= -2pt of agent] (or_agent) {};
            \path (a_agent) edge[arrow,bend left] node[below]{$a_n$} (a_env);
            \path (or_env) edge[arrow, bend left] node[above]{$e_n$} (or_agent);
    \end{tikzpicture}}
    \caption{The agent-environment interaction loop.}
    \label{fig:ae-loop}
\end{figure}
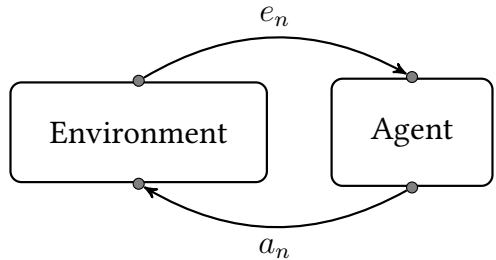

We assume that there exists a reward function which without much loss of generality is bounded.
\beq
{r : \H \to \Delta([r_{\min}, r_{\max}])}
\eeq
where $r_{\min} \leq r_{\max} \in \SetR$, which ``evaluates'' every history. For example, a wining position of a Chess board may be evaluated higher than a lost position. Or, the reward function can be the score achieved by the agent in a computer game \cite{Mnih2015,Silver2016,Silver2018}. As with $\mu$, the reward function $r$ in GRL can also arbitrarily depend on the complete history. For instance, two winning positions can be rewarded differently based on the strategy followed by a Chess playing agent. Without further loss of generality, we assume $r_{\min} = 0$ and $r_{\max} = 1$, as any rescaling of the reward function does not affect the decision-making process \cite{Hutter2000}.

The goal of the agent is to maximize the expected $\g$-discounted sum of rewards, also known as the \emph{history-value} function \cite{Hutter2000}. For any fixed policy $\pi$, we define the history-value function as
\beq
V^\pi(h) \coloneqq \E^\pi\left[\sum_{m=1}^\infty \g^{m-1} r_{\abs{h}+m}\middle| h\right]
\eeq
where $r_n$ is the (possibly random) reward received by the agent at the time-step $n$ and $\g \in [0,1)$ is the discount-factor. The \emph{optimal} history-value function $V^*$ is defined as the supremum over all policies.
\beq
V^*(h) \coloneqq \sup_{\pi} V^\pi(h)
\eeq
for all finite histories $h \in \H$. A policy is $\eps$-optimal (a.k.a. near-optimal) if its history-value function is $\eps$-close to $V^*$ for all histories. That is,
\beq
\sup_{h \in \H} \abs{V^*(h) - V^\pi(h)} \leq \eps
\eeq
and, a policy $\pi^*$ is \emph{optimal} if it is $0$-optimal. For any finite set of actions the optimal history-value function and an optimal policy always exist \cite{Lattimore2014b}. It is easy to see that we can also define a (pseudo\footnote{We call it a pseudo recursion as no history ever repeats.}) recursive form of the history-value function as
\beq\label{eq:be}
Q^\pi(ha) \coloneqq r_\mu(ha) + \g \sum_{e'} \mu(e'\|ha) V^\pi(hae')
\eeq
with a history-action-value function $Q^\pi$ which satisfies the relation
\beq
V^\pi(h) = \sum_{a} \pi(a\|h) Q^\pi(ha)
\eeq
and $r_\mu$ is the expected reward received in the environment $\mu$ after taking action $a$ at history $h$. \Cref{eq:be} is known as Bellman Equation (BE) \cite{Puterman2014}. The same iteration also holds for the optimal value functions \cite{Lattimore2013} as
\beq\label{eq:obe}
Q^*(ha) = r_\mu(ha) + \g \sum_{e'} \mu(e'\|ha) V^*(hae')
\eeq
which is called Optimal Bellman Equation (OBE) \cite{Puterman2014}, where $V^*(h) = \max_a Q^*(ha)$. As mentioned earlier, the goal of an agent in GRL is to behave in a way that the history-value of its policy is close to $V^*$.

\paradot{Abstractions of GRL}
To reduce the countable history-space to a finite state-space, we use deterministic\footnote{Using \emph{stochastic} abstraction maps might be an interesting extension of this setup. However, the deterministic case is sufficient for our purpose (\Cref{thm:state-bound,thm:bin-state-bound}).} abstraction maps. An abstraction $\psi$ is nothing but a map from the set of finite histories $\H$ to a finite\footnote{The finite state-space is \emph{not} a restriction. Later, we show that we only need a finite set of states to achieve near-optimal performance/value in every possible environment.} set of states $\S \coloneqq \{s^1, s^2, \dots, s^S\}$.
\beq
{\psi : \H \to \S}
\eeq

It is appropriate to say that an abstraction is extracting some ``features'' from every history \cite{Hutter2009}. The state (or the feature(s)) is a statistic about the past. For example, the \emph{shopping list} is a feature/state of the \emph{history} for a helper robot who has gone through each kitchen cabinet since morning.

A restricted version\footnote{The state abstraction function in the standard RL literature \cite{Abel2016} can be considered as an abstraction map $\psi: \OR \to \S$.} of the abstraction maps considered in this work is known as a \emph{state aggregation} function in standard RL \cite{Abel2016}.
Standard RL setting \cite{Sutton2018} \emph{is} a special case of a GRL setup with the abstraction $\psi_{\rm MDP}(h) = e_{\abs{h}}(h)$, i.e.\ the recent observation is the current state of the agent. So, standard RL ``works'' on domains where the most recent observation is a \emph{sufficient} statistic of the history \cite{Sutton2018}. Obviously, we cannot ``successfully'' use this abstraction on every domain, \emph{cf}.\ a tracking system which needs at least the last two most recent observations to estimate the velocity of any object being tracked \cite{Kaelbling1996}. We require the abstract policy and value to depend on the state only, but do not require the abstract process to be an MDP (more details later). \Cref{fig:ae-abstraction-loop} shows the information flow of a GRL setup with an abstraction.

The agent can either learn $\psi$ from data\footnote{The \emph{abstraction \cite{Maillard2011} / representation \cite{Bengio2013} / feature  \cite{Hutter2009} learning} problem is orthogonal to this work.} or it has been provided a fixed abstraction, e.g.\ the agent has been ``hard coded'' with some structural knowledge about the world, say the maximum tree depth in MCTS \cite{Silver2018} or a neural network architecture ``decided'' by a human designer \cite{Bengio2013}. It does not matter if $\psi$ is learned or fixed. Once the agent has an abstraction/feature map, the only thing which matters is how ``easily'' the agent can use it to plan a near-optimal policy.

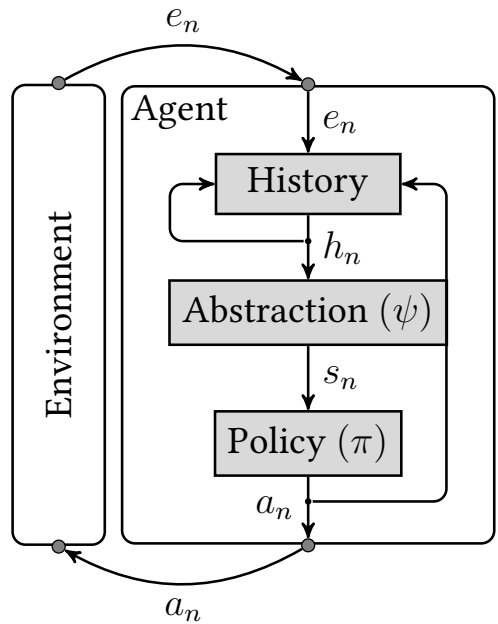
\begin{figure}[!ht]
    \centering
    \resizebox{0.4\textwidth}{!}{%
    \begin{tikzpicture}
        \node (h) [innernode]{History};
        \node (phi) [innernode, below=of h]{Abstraction $(\psi)$};
        \node (pi) [innernode, below=of phi]      {Policy $(\pi)$};
        \node [outernode, align=left, inner sep=0.5cm, fill=none, fit=(h) (phi) (pi), minimum height=5cm] (agent) {};
        \node[below right, inner sep=3pt, fill=none] at (agent.north west) {Agent};
        \node[outernode, left=0.17\textwidth of agent, fit=(agent.north)(agent.south), inner sep=0pt] (env) {};
        \node[below right, inner sep=0pt, fill=none, rotate=90, anchor=center] at (env) {Environment};
        \node[endpoint, above= -2pt of env] (or_env) {};
        \node[endpoint, below= -2pt of env] (a_env) {};
        \node[endpoint, below= -2pt of agent] (a_agent) {};
        \node[endpoint, above= -2pt of agent] (or_agent) {};

        \path (a_agent) edge[arrow,bend left] node[below]{$a_n$} (a_env);
        \path (or_env) edge[arrow, bend left] node[above]{$e_{n}$} (or_agent);
        \path (or_agent) edge[arrow] node[right]{$e_{n}$} (h);
        \path (h) edge[arrow] node[above=0.5pt,midway,name=h_phi,point]{} node[right]{$h_n$} (phi);
        \path (phi) edge[arrow] node[right]{$s_n$} (pi);
        \path (pi) edge[arrow] node[above=0.5pt,midway,name=pi_a,point]{} node[left]{$a_n$} (a_agent);
        \path (pi_a) edge[arrow, skip loop=1.5cm] (h.east);
        \path (h_phi) edge[->, skip loop=-1.5cm, thick, rounded corners] (h.west);
    \end{tikzpicture}}
    \caption{The agent-environment interaction through an abstraction.}
    \label{fig:ae-abstraction-loop}
\end{figure}

\paradot{Planning with Surrogate MDP}
It has already been known that planning over a POMDP or a PSR model is quite hard \cite{Papadimitriou1987,James2004}. The advantage of extra generality of these (standard) models over MDPs is undermined by the complexity of the planning (and even more so learning) part. Even in our setup, the abstraction may lead to a non-MDP state-action sequence $s_1a_1s_2a_2 \dots$, i.e.\ the probability of observing the next state may (still) depend on the complete history. However, we show (\Cref{thm:state-bound,thm:bin-state-bound}) that for some particularly ``useful'' non-Markovian abstractions (\Cref{def:vadp}) we retain the ability of planning using the standard optimized MDP planning/RL algorithms \cite{Bertsekas1996,Sutton2018}.

The agent in our setup ``pretends'' that the state-space of the abstraction is Markovian, even if it is non-Markovian in reality. This pretending induces a (set of) \emph{surrogate} MDP(s) for any abstraction map. The idea of ``pretending'' an abstraction being Markovian may seem naive, but, as we will discuss, it is quite an involved topic \cite{Hutter2016,Majeed2019}. As with the MDP abstraction $\psi_{\rm MDP}$, not every professed \emph{surrogate} MDP of every abstraction leads to a ``good'' policy for the underlying environment, e.g.\ see \citet[Theorem 10]{Hutter2016}. However, among the surrogate MDPs that lead to a near-optimal policy, we want one whose state-space is as \emph{small} as possible to aid planning \cite{Papadimitriou1987}. This is what we show in this work that there exists a class of non-Markovian abstractions (\Cref{def:vadp}) with a much smaller number of states (\Cref{thm:state-bound,thm:bin-state-bound}), as previously known (\Cref{thm:esa-bound}), on which the agent can reliably plan using a surrogate MDP.

But first, let us provide a formal expression of this ``pretend'' surrogate MDP.

\begin{definition}[Surrogate MDP]
For any abstraction $\psi$, the surrogate MDP is defined as

\beq\label{eq:surrogate-mdp}
\b \mu(s'\|sa) \coloneqq \sum_{m=1}^\infty w_m(sa)\sum_{h_m \in \H_m} \mu_\psi(s'\|h_m a)B_m(h_m\|sa)
\eeq
where
\beq
\mu_\psi(s'\|ha) \coloneqq \sum_{e':\psi(hae') = s'} \mu(e'\|ha)
\eeq
is the \emph{induced abstract process} (which is the standard marginal probability distribution of $h' \coloneqq hae'$ mapped to $s'$ given $ha$), $\{w_n(sa): n \geq 1\}$ is a unit summable weighting sequence and $B_n$ is an \emph{action-dependent dispersion distribution} for each time-step $n$ with the ``signature type''
\beq
{B_n: \S \times \A \to \Dist(\H_n)}
\eeq
such that $B_n(h\|sa) \coloneqq 0$ if $\psi(h) \neq s$ and $\abs{h} \neq n$ for any $sa$ state-action pair.

The reward signal is also \emph{abstracted} in the same way as
\beq
\b r_\mu(sa) \coloneqq \sum_{m=1}^\infty w_m(sa)\sum_{h_m \in \H_m} r_\mu(h_ma)B_m(h_m\|sa)
\eeq

Moreover the optimal value functions of the surrogate MDP are
\beq
q^*(sa) \coloneqq \b r_\mu(sa) + \g \sum_{s'} \b\mu(s' \| sa) v^*(s')
\eeq
where $v^*(s) \coloneqq \sum_{a} q^*(sa) \pi^*_{\b\mu}(a\|s)$ and $\pi^*_{\b\mu}$ is an optimal policy of the surrogate MDP.
\end{definition}

It is crucial to note that $B$ is action-dependent which sets it apart from the ``weighting functions'' considered in standard RL literature \cite{Abel2016,Roy2006}. Typically, the action-independent weights are assumed to be normalized stationary distribution over the aggregated underlying states of an MDP. In GRL, the history (i.e. ``underlying'' state of the environment) never repeats. So, there is is not reason for us to assume an action-independent weighting function in our setup.
See \citet{Hutter2016} for a detailed discussion on why \Cref{eq:surrogate-mdp} is indeed the ``correct'' form for the pretended surrogate MDP. Intuitively, \Cref{eq:surrogate-mdp} is a time-averaged, stationary Markovian process ``experienced'' by an agent who is weighing the time by $w$ for each state-action pair. If the analysis (and the guarantees) does not depend on choice of weights then the agent may use any ``model estimation'' method, see \citet{Hutter2016} for an example frequency estimation of the surrogate MDP.

For notational convenience, we sometimes call and use the $w$-weighted distribution $B \coloneqq \sum_m w_mB_m$ simply a \emph{dispersion distribution}.
A point of caution is that $B$ is not an ``ordinary'' probability measure over infinite histories. It is a probability mass function over histories of mixed lengths. \iflong See Appendix for details. \fi It is clear from \Cref{eq:surrogate-mdp} that any ``particular'' choice of $\{B_n, w_n : n \geq 1\}$ implies a ``particular'' surrogate MDP for an abstraction $\psi$. Our target in this work is to be agnostic of this choice which provides a lot of freedom to the agent on the choice of behavior policies \cite{Hutter2016,Majeed2018}.

Now that we have a formal definition of a surrogate MDP $\langle \b \mu, \b r_\mu \rangle$, it begs the question for which type of abstractions it makes sense to use this surrogate MDP to plan? When can we ``uplift'' the optimal policy of the surrogate MDP to a near-optimal policy in the original environment? By uplifting we mean to use the policy
\beq
\u\pi(h) = \pi^*_{\b\mu}(\psi(h))
\eeq
for all $h \in \H$ in the original environment $\mu$, where $\pi_{\b\mu}^*$ is any optimal policy of the surrogate MDP. Moreover, is there any upper bound on the number of states of a surrogate MDP, which is an indicator of planning complexity? \citet{Hutter2016} tackled these questions for the first time in this setting.

\paradot{Extreme State Aggregation} \citet{Hutter2016} started the seminal framework of Extreme State Aggregation (ESA). An ``extreme'' abstraction uses an $\eps$-discretized hypercube $[0, (1-\g)\inv]^A \ni Q^*(h, \cdot)$ for some $\eps > 0$ as its state-space. Each point in the space represents an abstract (quantized) state-action-value ``feature''. He proved the following key result.
\begin{theorem}[{\citealt[Theorem 11]{Hutter2016}}]\label{thm:esa-bound}
    For every environment and $\eps > 0$, there exists an (extreme) abstraction such that any optimal policy of any surrogate MDP is $\frac{\eps}{1-\g}$-optimal\footnote{We use the ``normalized'' $\eps$ to properly represent the optimality-gap scale with respect to the maximum possible value of $1/(1-\g)$. This choice provides a $\g$-independent meaning to $\eps$. For example, $\eps = 10^{-2}$ always implies a $1\%$ error-tolerance (or optimality-gap) for the history-values irrespective from the choice of discount-factor $\g$. The 3 in \citet[Theorem 11]{Hutter2016} can easily be improved to 2 by omitting the first grid-point.} in the original environment. The size of the state-space of the surrogate MDP is uniformly bounded for every environment as
    \beq\label{eq:esa-bound}
    S \leq \fracp{2}{\eps(1-\g)^2}^A
    \eeq
\end{theorem}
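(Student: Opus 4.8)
The plan is to exhibit the claimed abstraction explicitly and then bound the value loss of its uplifted policy by a two-level contraction argument. Since rewards lie in $[0,1]$ we have $Q^*(h,\cdot)\in[0,(1-\g)\inv]^A$, so I would fix $\eps_0\coloneqq\tfrac{\eps(1-\g)}{2}$, partition the hypercube $[0,(1-\g)\inv]^A$ into half-open axis-aligned cells of side $\eps_0$, and let $\psi(h)$ be the label of the cell containing $\big(Q^*(ha^1),\dots,Q^*(ha^A)\big)$. Each of the $A$ coordinates contributes at most $\lceil 2/(\eps(1-\g)^2)\rceil$ cells, so $S\le\fracp{2}{\eps(1-\g)^2}^A$, matching \Cref{eq:esa-bound}. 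The only feature of $\psi$ the rest of the proof uses is that $\psi(h)=\psi(h')$ forces $\abs{Q^*(ha)-Q^*(h'a)}\le\eps_0$ for every $a\in\A$.

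Next I would fix an arbitrary surrogate MDP (i.e.\ an arbitrary weighting sequence $\{w_n\}$ and dispersion distributions $\{B_n\}$) and study $\Delta\coloneqq\sup_{h,a}\abs{Q^*(ha)-q^*(\psi(h)a)}$, which is finite because both values lie in $[0,(1-\g)\inv]$. Two observations drive the argument. First, taking maxima over $a$ gives $\abs{V^*(h)-v^*(\psi(h))}\le\Delta$ for every history. Second, grouping the percepts $e'$ after $ha$ according to $s'=\psi(hae')$ and using $\abs{V^*(hae')-v^*(s')}\le\Delta$ together with the OBE \Cref{eq:obe} shows that $g_a(h')\coloneqq r_\mu(h'a)+\g\sum_{s'}\mu_\psi(s'\|h'a)v^*(s')$ satisfies $\abs{g_a(h')-Q^*(h'a)}\le\g\Delta$ for every history $h'$. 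Now expanding the definition of the surrogate MDP in \Cref{eq:surrogate-mdp} collapses $q^*$ into a convex combination $q^*(sa)=\sum_m w_m(sa)\sum_{h_m}B_m(h_m\|sa)\,g_a(h_m)$ over histories $h_m$ with $\psi(h_m)=s$.

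The crux is to close a self-referential bound on $\Delta$. For $h$ with $\psi(h)=s$, write each summand as $g_a(h_m)-Q^*(ha)=\big(g_a(h_m)-Q^*(h_ma)\big)+\big(Q^*(h_ma)-Q^*(ha)\big)$: the first difference is at most $\g\Delta$ by the second observation above, and the second is at most $\eps_0$ by the defining property of $\psi$ (both $h_m$ and $h$ map to $s$). Hence $\abs{q^*(sa)-Q^*(ha)}\le\g\Delta+\eps_0$, and taking the supremum over all $h,a$ gives $\Delta\le\eps_0+\g\Delta$, i.e.\ $\Delta\le\eps_0/(1-\g)$. Finally, the uplifted policy $\u\pi(h)=\pi^*_{\b\mu}(\psi(h))$ plays an action $a^*$ maximising $q^*(\psi(h),\cdot)$, so $Q^*(h,a^*)\ge v^*(\psi(h))-\Delta\ge V^*(h)-2\Delta$ at every history; a standard performance-difference step (via \Cref{eq:obe} and the Bellman equation for $V^{\u\pi}$: with $D\coloneqq\sup_h(V^*(h)-V^{\u\pi}(h))$ one gets $V^*(h)-V^{\u\pi}(h)\le 2\Delta+\g D$) then yields $\sup_h\abs{V^*(h)-V^{\u\pi}(h)}=D\le\tfrac{2\Delta}{1-\g}\le\tfrac{2\eps_0}{(1-\g)^2}=\tfrac{\eps}{1-\g}$, the claimed optimality gap.

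I expect the main obstacle to be the closing of the self-referential inequality: one must split $g_a(h_m)-Q^*(ha)$ through the two ``aligned'' differences $g_a(h_m)-Q^*(h_ma)$ and $Q^*(h_ma)-Q^*(ha)$ rather than detouring through $g_a(h)$, since a clumsier grouping picks up a spurious extra $\g\Delta$ term and the recursion would then converge only for $\g<\tfrac13$. A secondary point worth stressing is that it is precisely the convex-combination form of $q^*$ that makes the whole estimate independent of the weighting sequence and of the dispersion distribution, which is what the phrase ``any surrogate MDP'' in the statement requires; and since $\psi$ is built from the (unknown) function $Q^*$, the theorem is a pure existence statement and no learnability concern enters.
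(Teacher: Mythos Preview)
The paper does not prove \Cref{thm:esa-bound}; it is quoted verbatim as a prior result of \citet{Hutter2016}, so there is no in-paper proof to compare against. That said, your argument is correct and is in fact the standard one: the construction via an $\eps_0$-grid on $[0,(1-\g)\inv]^A$ is exactly what the paper sketches in words (``an $\eps$-discretized hypercube $[0,(1-\g)\inv]^A\ni Q^*(h,\cdot)$''), and your self-referential contraction on $\Delta=\sup_{h,a}\abs{Q^*(ha)-q^*(\psi(h)a)}$ is the same mechanism the paper later uses in \Cref{lem:q-b-q} for its own VADP result (there the recursion is on $\sup_{sa}\abs{q^*(sa)-\b Q^*(sa)}$, which is the $B$-averaged variant of your quantity). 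The final performance-difference step is the content of what the paper cites as \citet[Theorem~7]{Hutter2016}.

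One cosmetic point: your cell count gives $S\le\big\lceil 2/(\eps(1-\g)^2)\big\rceil^A$, which can exceed the stated $\big(2/(\eps(1-\g)^2)\big)^A$ by a ceiling; the footnote in the statement explains that the constant $2$ (improved from the original $3$) is obtained by a slightly different accounting of grid points, so if you want the bound exactly as written you should align your discretisation with that convention rather than half-open cells.
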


The above theorem is quite powerful. It establishes the existence of an abstraction for \emph{every} possible environment with a bounded number of features. One can be sure that a search through a space of models, e.g.\ by doing AutoML/meta learning \cite{Finn2017} or representation learning for a neural networks architecture \cite{Bengio2013}, of this complexity is sufficient for \emph{all} problems of interest. However,
we show that the upper bound in \Cref{eq:esa-bound} (is very loose and) can be significantly further improved.

\section{Improved Upper Bound}

In this section, we provide a novel non-Markovian abstraction map, which is later used to improve the bound in \Cref{eq:esa-bound}. \citet{Hutter2016} did originally consider the following ``coarser'' abstraction map as compared to the novel abstraction map considered in this work (\Cref{def:vadp}).

\begin{definition}[$\eps$-VDP Abstraction]
    An abstraction $\psi$ is $\eps$-VDP if
    \beq
    {\abs{V^*(h) - V^*(\d h)} \leq \eps} \land {\pi^*(h) = \pi^*(\d h)}
    \eeq
    for any pair of histories $h$ and $\d h$ mapped to the same state, i.e.\ $\psi(h) = \psi(\d h)$.
\end{definition}

However, Jan Leike soon refuted this abstraction by providing a comprehensive counter-example for such maps \cite[Theorem 10]{Hutter2016}. The counter-example suggests that $\eps$-VDP abstraction abstracts ``too much''. It is no longer able to represent the near-optimal policies of the original environment as optimal policies of the surrogate MDP(s). \iflong It is important to stress that we are interested in the abstractions which allow the use of surrogate MDPs. Otherwise it is trivially easy to represent any optimal policy of the environment with $A$ states, i.e.\ simply lump the histories together with same optimal actions:
\beq
{\psi(h) = \psi(\d h) \iff \pi^*(h) = \pi^*(\d h)}
\eeq
\fi

The original motivation of considering $\eps$-VDP abstraction by \citet{Hutter2016} was to improve the state bound of \Cref{thm:esa-bound}. \citet[Theorem 11]{Hutter2016} is based on the abstractions which lump $\eps$-close optimal history-action-value functions $Q^*$. One of the main reasons of choosing $Q^*$ was to easily represent the optimal policy $\pi^*$ which is simply the $\argmax$ of $Q^*$. However, the state-space bound turns out to be exponential in $A$. However, if he had a similar result for $\eps$-VDP abstractions then the bound would have turned out to be linear in $A$. Surprisingly, \citet{Hutter2016} was able to show that (extreme) abstractions based on $Q^\pi, Q^*, V^\pi$ can be used to get corresponding upper bounds on the state-space size but it is not possible with $V^*$. In this work, we show that we need to ``refine'' $\eps$-VDP abstractions a bit more to regain the ability of using a surrogate MDP for planning. We define this novel non-Markovian abstraction as follows.

\begin{definition}[$\eps$-VADP Abstraction]\label{def:vadp}
    An abstraction $\psi$ is $\eps$-VADP if
    \beq
    {\abs{V^*(h) - V^*(\d h)} \leq \eps} \land {\pi^*(h) = \pi^*(\d h)}
    \eeq
    and,
    \beq
    {\A_{\eps'}(h) = \A_{\eps'}(\d h)}
    \eeq
    for any pair of histories $h$ and $\d h$ mapped to the same state, i.e.\ $\psi(h) = \psi(\d h)$, and
    \beq
    \A_{\eps'}(h) \coloneqq {\{a \in \A \mid V^*(h) - Q^*(ha) \leq \eps' \}}
    \eeq
    is the set of \emph{all} $\eps'$-optimal actions and $\eps' := \fracp{1+ 3\g}{1-\g}\eps$
\end{definition}

Intuitively. an $\eps$-VADP abstraction produces features/states from histories which not only have the (approximate) equal history-values and same optimal action but it also ``identifies'' an $\eps'$-wide ``buffer zone''. So, later, when the surrogate MDP may pick a ``wrong'' action (due to the approximate aggregation), there should be enough ``error margin'' between the near-optimal actions, $\A_{\eps'}$, and the rest of the non-optimal actions, $\A \setminus \A_{\eps'}$. So, any uplifted optimal policy of the surrogate MDP will still be $\eps'$-optimal in the original environment. The counter-example to $\eps$-VDP in \citet{Hutter2016} relied on the lack of this ``buffer zone'' between the optimal and sub-optimal actions. In the rest of the section, we formally prove this fact. Before that, we need the ``$B$-averaged'' optimal history-action-value function.
\beq
\b Q^*(sa) \coloneqq \sum_{m=1}^\infty w_m(sa)\sum_{h_m \in \H_m} Q^*(h_ma)B_m(h_m\|sa)
\eeq

Luckily, we can show that there exists a relationship between the optimal state-action-value function with this ``baseline'' $B$-averaged state-action-value function (\Cref{lem:q-b-q}). This allows us to directly ``see'' and use the error margins between the aggregated history-action values and state-action values. We need the following stepping stone lemmas (\Cref{lem:abc,lem:max-q-v-rep}) to prove this key relationship.

\begin{lemma}[{$\max-\min$} compute]\label{lem:abc}
    If $\pi^*(h) = \pi^*(\d h)$ for any pair of histories $h$ and $\d h$ mapped to the same state, i.e.\ $\psi(h) = \psi(\d h)$, then the following holds:
    \beq
    \max_a \min_{h:\psi(h) = s} Q^*(ha) = \min_{h:\psi(h) = s} \max_a Q^*(ha)
    \eeq
\end{lemma}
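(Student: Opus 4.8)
The plan is to recognise this identity as the standard $\max$-$\min$ $\le$ $\min$-$\max$ inequality together with a reverse inequality that is bought by the hypothesis. First I would record the direction that needs no assumption: for any real-valued $Q^*$ and any fixed action $a_0$ and history $h_0$ with $\psi(h_0)=s$ one has $\min_{h:\psi(h)=s} Q^*(h a_0) \le Q^*(h_0 a_0) \le \max_a Q^*(h_0 a)$; taking the maximum over $a_0$ on the left and then the minimum over $h_0$ on the right yields
\beq
\max_a \min_{h:\psi(h)=s} Q^*(ha) \;\le\; \min_{h:\psi(h)=s} \max_a Q^*(ha).
\eeq
So only the reverse inequality uses that all histories mapped to $s$ share an optimal action.

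For the reverse direction, set $a^\star := \pi^*(h)$; by hypothesis this is the same for every $h$ with $\psi(h)=s$, hence a well-defined function of $s$ alone. Since $\pi^*$ is optimal, $Q^*(h a^\star) = V^*(h) = \max_a Q^*(ha)$ for every such $h$ (using $V^*(h)=\max_a Q^*(ha)$ from the OBE), so the right-hand side collapses:
\beq
\min_{h:\psi(h)=s} \max_a Q^*(ha) \;=\; \min_{h:\psi(h)=s} Q^*(h a^\star).
\eeq
On the other hand, instantiating the outer maximum on the left-hand side at the particular action $a^\star$ gives $\max_a \min_{h:\psi(h)=s} Q^*(ha) \ge \min_{h:\psi(h)=s} Q^*(h a^\star)$, which is exactly the expression just displayed. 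Chaining the two inequalities closes the loop and gives equality.

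I do not expect a genuine obstacle here; the only care needed is notational bookkeeping. The optima are taken over a \emph{fixed} state $s$, the set $\{h:\psi(h)=s\}$ is assumed non-empty (otherwise the statement is vacuous, consistent with the convention in the Notation paragraph), and since $\A$ is finite all $\max_a$ are attained, so nothing is lost by writing $\max$ rather than $\sup$; the argument is unchanged if one prefers to write $\inf$ over histories. Crucially, no appeal to the environment $\mu$, the surrogate MDP, or the value range is required — the identity is a pure consequence of ``one common optimal action per abstract state''.
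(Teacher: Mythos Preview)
Your proposal is correct and mirrors the paper's proof almost exactly: the easy direction is the generic $\max\min\le\min\max$ inequality, and the reverse direction is obtained by instantiating the outer $\max_a$ at the common optimal action $a^\star=\pi^*(\psi^{-1}(s))$ and using $Q^*(ha^\star)=\max_a Q^*(ha)$. The only cosmetic difference is that the paper phrases the second step as a chain of lower bounds on the left-hand side, whereas you first collapse the right-hand side and then compare; the content is identical.
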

\begin{proof}
    The upper bound trivially holds because of the minimax theorem.
    \beq
    \max_a \min_{h:\psi(h) = s} Q^*(ha) \leq  \min_{h:\psi(h) = s} \max_a Q^*(ha)
    \eeq
    The lower bound can also be proven as follows:
    \beq
    \max_a \min_{h:\psi(h) = s} Q^*(ha)
    \overset{(a)}{\geq} \min_{h:\psi(h) = s} Q^*(h\pi^*(\psi\inv(s)))
    \overset{(b)}{\geq} \min_{h:\psi(h) = s} \max_{a} Q^*(ha)
    \eeq
    where $(a)$ is true for any fixed action, and we have chosen the ``preserved'' optimal action $\pi^*$ and $(b)$ is true by the assumption of the lemma.
\end{proof}
\begin{lemma}[{$\max$ relationship}]\label{lem:max-q-v-rep}
    For any $B$ and $\eps$-VDP abstraction, the following holds:
    \beq
    \abs{\max_a \b Q^*(sa) - V^*(\psi\inv(s))} \leq \eps
    \eeq
    for every state $s$, where $V^*(\psi\inv(s))$ is the history-value of any \emph{representative} history mapped to the state.
\end{lemma}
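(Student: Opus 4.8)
The plan is to use the fact that, by construction, $\b Q^*(sa)$ is a \emph{convex combination} of the optimal history-action-values $Q^*(ha)$ over the histories $h$ with $\psi(h)=s$: indeed $\sum_m w_m(sa)=1$, each $B_m(\cdot\|sa)$ is a probability distribution on $\H_m$, and $B_m(h\|sa)=0$ whenever $\psi(h)\neq s$. Hence, for every action $a$,
\beq
\min_{h:\psi(h)=s} Q^*(ha) \;\le\; \b Q^*(sa) \;\le\; \max_{h:\psi(h)=s} Q^*(ha),
\eeq
so the claim reduces to sandwiching both $\max_a\min_{h:\psi(h)=s}Q^*(ha)$ and $\max_a\max_{h:\psi(h)=s}Q^*(ha)$ between $V^*(\psi\inv(s))\pm\eps$.

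First I would handle the upper bound. Taking $\max_a$ in the right inequality above and interchanging the two maxima,
\beq
\max_a \b Q^*(sa) \;\le\; \max_a \max_{h:\psi(h)=s} Q^*(ha) \;=\; \max_{h:\psi(h)=s} \max_a Q^*(ha) \;=\; \max_{h:\psi(h)=s} V^*(h) \;\le\; V^*(\psi\inv(s)) + \eps,
\eeq
using $V^*(h)=\max_a Q^*(ha)$ and the value-closeness clause of $\eps$-VDP (which applies since any such $h$ and the representative $\psi\inv(s)$ are both mapped to $s$). For the lower bound, taking $\max_a$ in the left inequality gives $\max_a\b Q^*(sa)\ge\max_a\min_{h:\psi(h)=s}Q^*(ha)$, and now \Cref{lem:abc} applies --- its hypothesis $\pi^*(h)=\pi^*(\d h)$ on each state is precisely the policy-preservation clause of $\eps$-VDP --- so the right-hand side equals $\min_{h:\psi(h)=s}\max_a Q^*(ha)=\min_{h:\psi(h)=s}V^*(h)\ge V^*(\psi\inv(s))-\eps$, again by value-closeness. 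Combining the two bounds yields $\abs{\max_a\b Q^*(sa)-V^*(\psi\inv(s))}\le\eps$.

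I expect the lower bound to be the only real obstacle: pulling the outer $\max_a$ through a \emph{minimum} over histories is not free in general, and it is exactly the policy-preservation part of $\eps$-VDP (not merely value-closeness) that legitimizes this interchange, through the minimax identity of \Cref{lem:abc}. The convexity observation and the upper bound are routine.
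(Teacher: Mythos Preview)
Your proof is correct and follows essentially the same route as the paper: both use that $\b Q^*(sa)$ is a convex combination of $Q^*(ha)$ over $h\in\psi\inv(s)$ to sandwich it between the $\min$ and $\max$, obtain the upper bound by interchanging the two maxima and applying value-closeness, and obtain the lower bound by invoking \Cref{lem:abc} (via policy preservation) to swap $\max_a$ and $\min_h$ before applying value-closeness again.
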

\begin{proof}
    We start by showing the upper bound.
    \beq
    \max_a \b Q^*(sa)
    \overset{(a)}{\leq} {\max_a \max_{h} Q^*(ha) = \max_h \max_a Q^*(ha)}
    = {\max_h V^*(h) \overset{(b)}{\leq} V^*(\psi\inv(s)) + \eps}
    \eeq
    where $(a)$ is true due to the fact that expectation is upper bounded by the maximum value and $(b)$ is because of $\eps$-value uniformity of the abstraction. Now we prove the lower bound.
    \beq
    \max_a \b Q^*(sa)
    \overset{(d)}{\geq} \max_a \min_{h:\psi(h) = s} Q^*(ha)
    \overset{(e)}{=} {\min_{h:\psi(h) = s} \max_{a} Q^*(ha) = \min_{h:\psi(h) = s} V^*(h)}
    \overset{(f)}{\geq} V^*(\psi\inv(s)) - \eps
    \eeq
    where $(d)$ holds by the fact that minimum value lower bounds the expectation, $(e)$ is due to \Cref{lem:abc}, and $(f)$ us using $\eps$-uniformity of the optimal value.
\end{proof}

\begin{lemma}\label{lem:q-b-q}
    For any surrogate MDP of an $\eps$-VDP abstraction, the following holds:
    \beq
    \abs{q^*(sa) - \b Q^*(sa)} \leq \frac{2\g\eps}{1-\g}
    \eeq
    for any $sa$-pair.
\end{lemma}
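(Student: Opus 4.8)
The plan is to bound $\abs{q^*(sa) - \b Q^*(sa)}$ by comparing the two Bellman-type recursions they satisfy and iterating the resulting one-step error. The function $q^*$ satisfies the surrogate Bellman equation $q^*(sa) = \b r_\mu(sa) + \g \sum_{s'} \b\mu(s'\|sa) v^*(s')$, while $\b Q^*$ is the $B$-average of $Q^*(h a) = r_\mu(ha) + \g\sum_{e'}\mu(e'\|ha)V^*(hae')$. Since both the reward abstraction $\b r_\mu$ and the transition kernel $\b\mu$ are defined as the \emph{same} $\{w_m,B_m\}$-averages of their history-level counterparts, averaging the history-level Bellman equation for $Q^*$ term by term gives
\beq
\b Q^*(sa) = \b r_\mu(sa) + \g \sum_{m}w_m(sa)\sum_{h_m}B_m(h_m\|sa)\sum_{e'}\mu(e'\|h_ma)V^*(h_ma e'),
\eeq
and the last double sum, after collecting the $e'$ with $\psi(h_mae')=s'$, is exactly $\sum_{s'}\b\mu(s'\|sa)\,\overline{V^*}(s')$ where $\overline{V^*}(s')$ is a $B$-weighted average of $V^*$ over histories mapped to $s'$. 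So $\b Q^*$ satisfies a Bellman-like equation with the \emph{same} reward and transition as $q^*$, but with $\overline{V^*}(s')$ in place of $v^*(s')$.

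Next I would relate $v^*$ and $\overline{V^*}$ to a common anchor. By $\eps$-value-uniformity, every $V^*(h)$ with $\psi(h)=s'$ is within $\eps$ of the representative value $V^*(\psi\inv(s'))$, hence so is the average $\overline{V^*}(s')$. On the other side, $v^*(s) = \max_a q^*(sa)$; I would use \Cref{lem:max-q-v-rep} only after the error bound is in hand, or more cleanly, carry the unknown $\Delta \coloneqq \sup_{sa}\abs{q^*(sa)-\b Q^*(sa)}$ through the recursion. Subtracting the two Bellman equations:
\beq
q^*(sa) - \b Q^*(sa) = \g\sum_{s'}\b\mu(s'\|sa)\bigl(v^*(s') - \overline{V^*}(s')\bigr).
\eeq
Now $v^*(s') - \overline{V^*}(s') = \max_b q^*(s'b) - \overline{V^*}(s')$, and I would sandwich this: replacing $q^*(s'b)$ by $\b Q^*(s'b)$ costs at most $\Delta$, and $\max_b \b Q^*(s'b)$ differs from $\overline{V^*}(s')$ by at most $2\eps$ — a short argument essentially identical to the proof of \Cref{lem:max-q-v-rep} (both are within $\eps$ of $V^*(\psi\inv(s'))$, using \Cref{lem:abc} for the lower bound). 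Thus $\abs{v^*(s') - \overline{V^*}(s')} \le \Delta + 2\eps$... this is slightly too lossy; instead I would split $v^* - \overline{V^*} = (v^* - \overline{V^*})$ and bound it more carefully as $\le \g/(1-\g)$-type contraction isn't available here since the outer factor is already $\g$. Let me reconsider: take absolute values, $\Delta \le \g(\Delta + 2\eps)$? That gives $\Delta \le 2\g\eps/(1-\g)$, which is exactly the claimed bound — so the clean route is: show $\abs{v^*(s')-\overline{V^*}(s')} \le \abs{\max_b\b Q^*(s'b)-\overline{V^*}(s')} + \Delta \le 2\eps + \Delta$, wait that still overshoots. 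The sharp version replaces the "$2\eps$" by "$\eps$": since $\overline{V^*}(s')$ is within $\eps$ of $V^*(\psi\inv(s'))$ and, by the averaged analogue of \Cref{lem:max-q-v-rep}, $\max_b \b Q^*(s'b)$ is also within $\eps$ of the \emph{same} anchor $V^*(\psi\inv(s'))$ — but actually both bounds go the same direction so they don't simply add; the max over $b$ of $\b Q^*$ and the average $\overline{V^*}$ both concentrate near $V^*(\psi\inv(s'))$ within $\eps$, so their difference is within... I'd need $2\eps$ in general, giving $\Delta \le \g(2\eps+\Delta)$ hence $\Delta \le 2\g\eps/(1-\g)$ only if the "$2$" cancels — it does: $\Delta(1-\g) \le 2\g\eps$, done.

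\textbf{Main obstacle.} The delicate point is getting the constant right: naively one picks up $2\eps$ from comparing $\max_b \b Q^*(s'b)$ with the $B$-average $\overline{V^*}(s')$, and then $\Delta \le \g(2\eps + \Delta)$ yields precisely $\Delta \le \frac{2\g\eps}{1-\g}$, so the accounting is in fact exactly tight — the care needed is to verify that no \emph{additional} $\eps$ leaks in from the reward-averaging step (it does not, since $\b r_\mu$ is defined to match $r_\mu$ under the same $B$) and that the max/average comparison genuinely stays within $2\eps$ by routing both through the single representative value $V^*(\psi\inv(s'))$ via \Cref{lem:max-q-v-rep} and its averaged variant. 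I would also need to confirm the interchange of the (infinite) $\sum_m w_m$, the finite $\sum_{h_m}$, and the $\sum_{e'}$ sums is legitimate — it is, by nonnegativity and unit summability of the weights, so Tonelli applies and the term-by-term averaging of the history Bellman equation is valid.
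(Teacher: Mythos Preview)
Your proposal is correct and follows essentially the same route as the paper: set $\Delta=\sup_{sa}\abs{q^*(sa)-\b Q^*(sa)}$, cancel the common $\b r_\mu$, anchor the next-state values to the representative $V^*(\psi\inv(s'))$ via \Cref{lem:max-q-v-rep} and $\eps$-value-uniformity to obtain $\Delta\le\g(\Delta+2\eps)$, and solve. One small point of care: your $\overline{V^*}(s')$ silently depends on $sa$ (through $B(\cdot\|sa)$ and the transitions $\mu$), so the clean factorization $\sum_{s'}\b\mu(s'\|sa)\,\overline{V^*}(s')$ is only notational convenience --- the paper sidesteps this by inserting $\pm V^*(\psi\inv(s'))$ directly inside the history sum rather than introducing an intermediate averaged value, but the arithmetic is identical.
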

\begin{proof}
    Let $\delta := \sup_{sa} \abs{q^*(sa) - \b Q^*(sa)}$. First, we need the following bound to prove the main result.
    \beq\label{eq:v-v-rep}
    \abs{v^*(s) - V^*(\psi\inv(s))}
    = \abs{\max_a q^*(sa)  - \max_a \b Q^*(sa) + \max_a \b Q^*(sa) - V^*(\psi\inv(s)) }
    \overset{(a)}{\leq} {\max_a \abs{q^*(sa)  - \b Q^*(sa)} + \abs{\max_a \b Q^*(sa) - V^*(\psi\inv(s))}}
    \overset{(b)}{\leq} \delta + \eps
    \eeq
    where $(a)$ is a simply the triangular inequality and the mathematical fact that $\abs{\max_x g(x) - \max_x f(x)} \leq \max_{x}\abs{g(x) - f(x)}$, and $(b)$ uses the definition of $\delta$ and \Cref{lem:max-q-v-rep}.
    \beq
    \abs{q^*(sa) - \b Q^*(sa)}
    \overset{(c)}{=}\g \abs{\sum_{s'} \b\mu(s'\|sa)v^*(s') - \sum_{h}B(h\|sa)\sum_{e'}\mu(e'\|ha)V^*(hae')}
    \overset{(d)}{\leq} \g \sum_{s'} \b\mu(s'\|sa)\abs{v^*(s') - V^*(\psi\inv(s'))} + \g \sum_{h}B(h\|sa) \times \sum_{e':\psi(hae') = s'} \mu(e'\|ha) \abs{V^*(\psi\inv(s')) - V^*(hae')}
    \overset{(e)}{\leq} {\g \sum_{s'} \b\mu(s'\|sa)\abs{v^*(s') - V^*(\psi\inv(s'))} + \g \eps \overset{(f)}{\leq} \g \delta + 2\g \eps}
    \eeq
    where $(c)$ is using the definitions of $q^*$ and $\b Q^*$, $(d)$ is simple algebra, $(e)$ uses $\eps$-uniformity of the optimal value, and $(f)$ is due to \Cref{eq:v-v-rep}. Taking $\sup_{sa}$ on the l.h.s. and solving w.r.t. $\delta$, we have the claim as $\delta \leq \frac{2\g\eps}{1-\g}$.
\end{proof}

The above lemma is already hinting about the affects of the approximation error (or optimality-gap) $\eps$ on the choice of the optimal actions of the surrogate MDP. Because  $q^*$ and $\b Q^*$ are not exactly the same, therefore with any small discrepancy, the agent can be ``fooled'' into choosing a non-optimal action. The surrogate MDP can wrongly favor a ``non-optimal'' action as $\pi^*_{\b\mu}$ instead of $\pi^*$ because of this error margin, which could be an arbitrarily worse choice in some histories. We show in the following lemma that $\eps$-VADP abstractions do not allow for such errors. The surrogate MDP can only ``wrongly favor'' actions from $\A_{\eps'}$.

\begin{lemma}\label{lem:no-sub-support}
    Any optimal policy of any surrogate MDP of an $\eps$-VADP abstraction has support \emph{only} in $\A_{\eps'}(s)$ for every state $s$, where $\eps' := \fracp{1+3\g}{1-\g}\eps$.
\end{lemma}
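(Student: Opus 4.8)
\textit{Proof plan.} The plan is to argue in the contrapositive and pointwise: for every state $s$ and every action $a \notin \A_{\eps'}(s)$ I will show $q^*(sa) < v^*(s)$. Since for any optimal policy $\pi^*_{\b\mu}$ of the (finite) surrogate MDP we have $v^*(s) = \sum_{a'} q^*(sa')\pi^*_{\b\mu}(a'\|s)$ with every term $q^*(sa') \le v^*(s)$, such an $a$ must then receive zero probability, which is exactly the claim. So fix $s$ and an action $a \notin \A_{\eps'}(s)$. Because the abstraction is $\eps$-VADP, the preserved optimal action $a^* := \pi^*(\psi\inv(s))$ and the near-optimal set $\A_{\eps'}(\psi\inv(s)) = \A_{\eps'}(s)$ are common to all histories mapped to $s$. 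It is convenient to introduce the $B$-averaged optimal history value $\b V^*(sa) := \sum_{m=1}^\infty w_m(sa)\sum_{h_m \in \H_m} V^*(h_m) B_m(h_m\|sa)$. Since $B_m(\cdot\|sa)$ is supported on histories mapped to $s$, and by $\eps$-value uniformity all such histories have $V^*$ lying in a common interval of width at most $\eps$, the averages satisfy $\abs{\b V^*(sa') - \b V^*(sa'')} \le \eps$ for any two actions $a',a''$.

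Two elementary averaging facts then do the work. First, for every $h$ mapped to $s$ we have $\pi^*(h) = a^*$, hence $Q^*(ha^*) = V^*(h)$, so averaging against $B(\cdot\|sa^*)$ gives $\b Q^*(sa^*) = \b V^*(sa^*)$. Second, $a \notin \A_{\eps'}(s)$ means $V^*(h) - Q^*(ha) > \eps'$ for every $h$ mapped to $s$, so averaging against $B(\cdot\|sa)$ (a genuine probability mass function over such histories) gives the strict inequality $\b Q^*(sa) < \b V^*(sa) - \eps'$. Combining these with \Cref{lem:q-b-q}, which applies since an $\eps$-VADP abstraction is in particular $\eps$-VDP, used for both $a$ and $a^*$:
\begin{align*}
q^*(sa) &\le \b Q^*(sa) + \tfrac{2\g\eps}{1-\g}
       < \b V^*(sa) - \eps' + \tfrac{2\g\eps}{1-\g} \\
       &\le \b V^*(sa^*) + \eps - \eps' + \tfrac{2\g\eps}{1-\g}
       = \b Q^*(sa^*) + \eps - \eps' + \tfrac{2\g\eps}{1-\g} \\
       &\le q^*(sa^*) + \Bigl(\eps + \tfrac{4\g\eps}{1-\g}\Bigr) - \eps'.
\end{align*}
Since $\eps' = \fracp{1+3\g}{1-\g}\eps = \eps + \tfrac{4\g\eps}{1-\g}$, the bracket cancels $\eps'$ exactly and we are left with $q^*(sa) < q^*(sa^*) \le v^*(s)$, as wanted.

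I expect the two averaging observations and the closing arithmetic to be routine, and the tight value of $\eps'$ drops out precisely because the second displayed inequality is strict. The one step needing care — and where a careless estimate loses the tight constant — is $\b V^*(sa) \le \b V^*(sa^*) + \eps$: one must compare the two $B$-averages with each other (both lie in the same width-$\eps$ band of $V^*$ over $\psi\inv(s)$) rather than bounding each separately against a fixed representative value $V^*(\psi\inv(s))$, which would yield only $+2\eps$ and force a strictly larger $\eps'$ (indeed $\tfrac{2(1+\g)}{1-\g}\eps$). One must also keep in mind that $B_m(\cdot\|sa)$ and $B_m(\cdot\|sa^*)$ are in general different dispersion distributions, so the averaging has to be carried out per action; the only property used about them is that each is supported inside $\psi\inv(s)$.
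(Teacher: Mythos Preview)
Your proof is correct and lands on the same tight constant $\eps' = \fracp{1+3\g}{1-\g}\eps$ as the paper, but by a slightly different route. The paper bounds $v^*(s)$ from below by passing through $\max_a \min_{h} Q^*(ha)$ and invoking the $\max$--$\min$ interchange of \Cref{lem:abc} to reach $\min_h V^*(h)$, then uses $\eps$-uniformity once to get to $\max_h V^*(h)$, and finally bounds $\b Q^*(sb)$ from above by $\max_h Q^*(hb)$. You instead fix the preserved optimal action $a^*$ and work with the auxiliary averages $\b V^*(s\cdot)$, exploiting the exact identity $\b Q^*(sa^*) = \b V^*(sa^*)$ together with the observation that any two such averages lie in the same width-$\eps$ band. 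Your argument therefore does not cite \Cref{lem:abc} directly in this step (it is still used implicitly inside \Cref{lem:q-b-q}), and the averaging identities make the arithmetic a touch cleaner; the paper's version is more uniform in style with its surrounding lemmas. Both arguments lose exactly one $\eps$ --- you when comparing $\b V^*(sa)$ to $\b V^*(sa^*)$, the paper when passing from $\min_h V^*$ to $\max_h V^*$ --- which is why the constants coincide, and your closing remark about why comparing each average to a fixed representative would cost $2\eps$ and force $\fracp{2(1+\g)}{1-\g}\eps$ is on point.
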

\begin{proof}
    Let $b$ be any action in $\A\setminus\A_{\eps'}(s)$ for any state $s$. We show that $b$ is not an optimal action on $s$ in any surrogate MDP, i.e.\ $v^*(s) > q^*(sb)$.
    \beq
    v^*(s) = {\max_{a} q^*(sa) \overset{(a)}{\geq} \max_{a} \b Q^*(sa) - \frac{2\g\eps}{1-\g}}
    \overset{(b)}{\geq} \max_{a} \min_{h\in\psi\inv(s)} Q^*(ha) - \frac{2\g\eps}{1-\g}
    \overset{(c)}{=} \min_{h\in\psi\inv(s)} V^*(h) - \frac{2\g\eps}{1-\g}
    \overset{(d)}{\geq}  \max_{h\in\psi\inv(s)} V^*(h) - \eps - \frac{2\g\eps}{1-\g}
    \overset{(e)}{>}  \max_{h\in\psi\inv(s)} Q^*(hb) + \frac{(1 + 3\g) \eps}{1-\g} - \frac{(1 + \g) \eps}{1-\g}
    \overset{(f)}{\geq} {\b Q^*(sb) + \frac{2\g\eps}{1-\g} \overset{(g)}{\geq} q^*(sb)}
    \eeq
    where $(a)$ and $(g)$ both use \Cref{lem:q-b-q}, $(b)$ and $(f)$ are due to the fact that $\max X \geq \E[X] \geq \min X$, $(c)$ is true by \Cref{lem:abc}, $(d)$ holds due to $\eps$-uniformity of the optimal value, and (the most important step) $(e)$ is true by the sub-$\eps'$-optimality of action $b$, which is guaranteed by the abstraction.
\end{proof}

The above lemma has already done the heavy lifting for us. Now, we can easily prove that the uplifted policy of any surrogate MDP is near-optimal in the original environment.

\begin{theorem}\label{thm:policy-uplift}
    Any optimal policy of any surrogate MDP of an $\eps$-VADP abstraction is $\fracp{1+3\g}{(1-\g)^2}\eps$-optimal in the original environment.
\end{theorem}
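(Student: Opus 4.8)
The plan is to run the standard performance-difference fixed-point argument on the optimality gap of the uplifted policy $\u\pi(h) = \pi^*_{\b\mu}(\psi(h))$, feeding in \Cref{lem:no-sub-support} as the single non-trivial ingredient. Define $\Delta \coloneqq \sup_{h\in\H}\big(V^*(h) - V^{\u\pi}(h)\big)$, which is non-negative since $V^*$ dominates the value of every policy. The goal is to show $\Delta \leq \eps'/(1-\g) = \fracp{1+3\g}{(1-\g)^2}\eps$; this is exactly the claimed bound once we note that $V^* \geq V^{\u\pi}$ pointwise, so the supremum of the absolute difference is just $\Delta$.

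First I would fix an arbitrary history $h$, set $s \coloneqq \psi(h)$, and expand
\beq
V^*(h) - V^{\u\pi}(h) = \sum_{a}\u\pi(a\|h)\big(V^*(h) - Q^{\u\pi}(ha)\big),
\eeq
using $V^{\u\pi}(h) = \sum_a \u\pi(a\|h) Q^{\u\pi}(ha)$. Then I would split each summand as $\big(V^*(h)-Q^*(ha)\big) + \big(Q^*(ha)-Q^{\u\pi}(ha)\big)$. For the second bracket, the Bellman recursions for $Q^*$ and $Q^{\u\pi}$ share the same immediate reward $r_\mu(ha)$, so $Q^*(ha)-Q^{\u\pi}(ha) = \g\sum_{e'}\mu(e'\|ha)\big(V^*(hae') - V^{\u\pi}(hae')\big) \leq \g\Delta$. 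For the first bracket, \Cref{lem:no-sub-support} guarantees that $\u\pi$ puts mass only on actions in $\A_{\eps'}(s)$; since the $\eps$-VADP property forces $\A_{\eps'}$ to be constant across the equivalence class of $h$, every such action $a$ satisfies $V^*(h) - Q^*(ha) \leq \eps'$ directly by the definition of $\A_{\eps'}(h)$. Combining, $V^*(h) - V^{\u\pi}(h) \leq \eps' + \g\Delta$ for every $h$, hence $\Delta \leq \eps' + \g\Delta$ and $\Delta \leq \eps'/(1-\g)$.

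I do not expect a genuine obstacle here: \Cref{lem:q-b-q} and \Cref{lem:no-sub-support} have already absorbed all the delicacy of relating the surrogate-MDP value $q^*$ to the $B$-averaged $\b Q^*$ and to the true $Q^*$. The one point that needs care is the bookkeeping in the step above, namely that ``support in $\A_{\eps'}(s)$'' (a statement about a state) legitimately becomes ``support in $\A_{\eps'}(h)$'' (a statement about the concrete history we expanded). That translation is exactly the extra clause $\eps$-VADP adds over $\eps$-VDP, and it is the reason Leike's counter-example to the $\eps$-VDP version does not apply; after it, the remaining argument is a one-line geometric contraction with ratio $\g$, and the displayed constant $\fracp{1+3\g}{(1-\g)^2}$ is simply $\eps'/(1-\g)$ unwound.
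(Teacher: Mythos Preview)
Your proposal is correct and follows essentially the same route as the paper: both use \Cref{lem:no-sub-support} together with the $\eps$-VADP clause $\A_{\eps'}(h)=\A_{\eps'}(\d h)$ to conclude $V^*(h)-Q^*(h\u\pi(h))\leq \eps'$, and then turn this one-step regret into $\eps'/(1-\g)$ via the geometric contraction. The only cosmetic difference is that the paper outsources the contraction step to \citet[Theorem 7]{Hutter2016}, whereas you spell out that fixed-point argument directly; your version is therefore a bit more self-contained.
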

\begin{proof}
    \Cref{lem:no-sub-support} establishes that any optimal policy $\pi^*_{\b\mu}$ of any surrogate MDP $\b\mu$ has support only in $\A_{\eps'}$, i.e.\ $\pi^*_{\b\mu}(s) \in \A_{\eps'}(s)$ for all $s$, where $\eps' := (\frac{1+3\g}{1-\g})\eps$. Hence, if we uplift this policy to the original environment as $\u\pi(h) := \pi^*_{\b\mu}(\psi(h))$ for every history $h$, then we have the guarantee that $V^*(h) - Q^*(h\u\pi(h)) \leq \eps'$.
    Therefore by \citet[Theorem 7]{Hutter2016}, we get the claim that $0 \leq V^*(h) - V^{\u\pi}(h) \leq \frac{\eps'}{1-\g}$ for every history $h$.
\end{proof}

Building on the above result, we provide the main claim of this work.

\begin{theorem}\label{thm:state-bound}
    For every environment and $\eps > 0$, there exists an (extreme) abstraction such that any optimal policy of any surrogate MDP is $\frac{\eps}{1-\g}$-optimal in the original environment. The size of the state-space of the surrogate MDP is uniformly bounded for every environment as
    \beq
    S \leq {\frac{(1+3\g)A\cdot 2^{A-1}}{\eps(1-\g)^2} \leq \frac{4A\cdot 2^{A-1}}{\eps(1-\g)^2}}
    \eeq
\end{theorem}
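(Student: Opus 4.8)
The plan is to construct an explicit $\eps_0$-VADP abstraction (for a suitably rescaled $\eps_0$) whose state-space is small, and then invoke Theorem~\ref{thm:policy-uplift} to turn the optimality guarantee into the $\frac{\eps}{1-\g}$-optimality claim. The state of a history $h$ should encode exactly the three pieces of information that the definition of $\eps$-VADP requires to be preserved: (i) a quantization of the scalar $V^*(h) \in [0,(1-\g)^{-1}]$ into bins of width $\eps_0$, (ii) the optimal action $\pi^*(h) \in \A$, and (iii) the set $\A_{\eps_0'}(h) \subseteq \A$ of $\eps_0'$-optimal actions, where $\eps_0' = \frac{1+3\g}{1-\g}\eps_0$. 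Concretely I would set $\psi(h) \coloneqq \bigl(\lfloor V^*(h)/\eps_0 \rfloor,\ \pi^*(h),\ \A_{\eps_0'}(h)\bigr)$. Two histories with the same value under $\psi$ then automatically satisfy $\abs{V^*(h)-V^*(\d h)} \le \eps_0$, $\pi^*(h)=\pi^*(\d h)$, and $\A_{\eps_0'}(h)=\A_{\eps_0'}(\d h)$, so $\psi$ is $\eps_0$-VADP by construction (one should note $\pi^*(h) \in \A_{\eps_0'}(h)$, so the middle coordinate picks a distinguished element of the third coordinate, which is what lets the count come out as a product rather than $A \cdot 2^A$).

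Next I would count the states. The value-bin index ranges over $\{0,1,\dots,\lceil \tfrac{1}{\eps_0(1-\g)}\rceil\}$, and since the maximal value $(1-\g)^{-1}$ can only be attained by the top grid point (which we can merge with its neighbour, as in the footnote to Theorem~\ref{thm:esa-bound}, saving one), there are at most $\tfrac{1}{\eps_0(1-\g)}$ distinct value bins. For the action information: the third coordinate is a nonempty subset $\A_{\eps_0'}(h) \subseteq \A$, and the second coordinate is a chosen element of it. Summing over all nonempty subsets $T$ the number of choices of a distinguished element, $\sum_{\emptyset \ne T \subseteq \A} \abs{T} = \sum_{k=1}^{A} k \binom{A}{k} = A\,2^{A-1}$. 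Hence the total number of states is at most $\tfrac{1}{\eps_0(1-\g)} \cdot A\,2^{A-1}$.

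Finally I would fix the rescaling. Theorem~\ref{thm:policy-uplift} gives that the uplifted optimal policy of any surrogate MDP of an $\eps_0$-VADP abstraction is $\frac{1+3\g}{(1-\g)^2}\eps_0$-optimal. To match the stated conclusion of $\frac{\eps}{1-\g}$-optimality, set $\eps_0 \coloneqq \frac{\eps}{(1+3\g)(1-\g)^{-1}} = \frac{(1-\g)\eps}{1+3\g}$. Substituting into the state count gives $S \le \frac{1}{(1-\g)\eps_0(1-\g)} \cdot A\,2^{A-1}$... wait, I mean $S \le \frac{1}{\eps_0(1-\g)} A\,2^{A-1} = \frac{(1+3\g)A\,2^{A-1}}{\eps(1-\g)^2}$, and then bounding $1+3\g \le 4$ since $\g \in [0,1)$ yields the second inequality $S \le \frac{4A\,2^{A-1}}{\eps(1-\g)^2}$.

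I expect the main obstacle to be the bookkeeping of the two nested $\eps$-rescalings — one inside the $\eps$-VADP definition (the $\eps' = \frac{1+3\g}{1-\g}\eps$ buffer) and one in passing from Theorem~\ref{thm:policy-uplift}'s $\frac{1+3\g}{(1-\g)^2}\eps_0$ to the advertised $\frac{\eps}{1-\g}$ — keeping these straight so the final constants land exactly on $\frac{(1+3\g)A\cdot 2^{A-1}}{\eps(1-\g)^2}$. The combinatorial identity $\sum_k k\binom{A}{k} = A 2^{A-1}$ is the only non-routine algebraic ingredient, and it is standard. Everything else is a direct assembly of the lemmas already proved: the heavy lifting (that optimal surrogate policies stay inside $\A_{\eps'}$) is done in Lemma~\ref{lem:no-sub-support}, and the value-loss bound is imported from \citet[Theorem 7]{Hutter2016} via Theorem~\ref{thm:policy-uplift}.
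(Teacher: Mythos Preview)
Your proposal is correct and follows essentially the same route as the paper: construct the explicit abstraction $\psi(h)=(\text{value-bin},\pi^*(h),\A_{\eps'}(h))$ with the rescaling $\eps_0=\frac{(1-\g)\eps}{1+3\g}$, then invoke Theorem~\ref{thm:policy-uplift} and count. The only cosmetic difference is that the paper counts the action part directly as $A\times 2^{A-1}$ (choose $\pi^*$, then each of the remaining $A-1$ actions is in or out of $\A_{\eps'}$) rather than via your summation identity $\sum_k k\binom{A}{k}=A\,2^{A-1}$; both arrive at the same bound.
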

\begin{proof}
    We provide a constructive proof of the claim. Let $\psi_\mu$ be an (extreme) $\fracp{1-\g}{1+3\g}\eps$-VADP abstraction of the environment as
    \beq
    \psi_\mu(h) \coloneqq \left(\left\lceil\frac{V^*(h)(1+3\g)}{\eps(1-\g)}\right\rceil, \pi^*(h), \A_{\eps}(h) \right)
    \eeq
    for every history $h$. The above equation is basically a ``feature'' extractor. It is putting the history-value its own discretized bin, identifying the optimal action, and finally labeling all $\eps$-optimal actions of the history, which is exactly the definition of an $\frac{\eps}{1-\g}$-VADP abstraction (\Cref{def:vadp}).
    Now, by \Cref{thm:policy-uplift} we are guaranteed that the uplifted policy of any surrogate MDP of $\psi_\mu$ is $\frac{\eps}{1-\g}$-optimal in the original environment. Moreover, it is easy to see that the size of the state-space is
    \beq\label{eq:max-states}
    S = \left\lfloor\frac{1+3\g}{\eps(1-\g)^2}\right\rfloor \times A \times 2^{A-1}
    \eeq
    where the first term is the number of bins a value can be put in, the second term is the number of different optimal actions, and the third value is the number of different remaining $A-1$ actions which can be $\eps$-optimal. This proves the main claim.
\end{proof}

Interestingly, there is a way to further improve this bound. However for that to happen, we need another element in our setup. \citet{Majeed2020} introduced a general method of sequentializing the decision-making process for any environment. We can do \emph{binarization} of the environment, as described by \citet{Majeed2020}, by (logically) putting a ``binary-mock'' around the environment which only takes binary input as actions, i.e.\ $\A = \{a^1, a^2\}$ and respond to the agent with a previously buffered percept and reward. The agent is ``effectively'' interacting with a ``binarized version'' of the environment where it only takes binary decisions, see \citet{Majeed2020} for more details about the action-sequentialization setup. Besides a significantly reduced state-space, also the action-space reduces in the surrogate MDP to 2 actions.

\begin{theorem}\label{thm:bin-state-bound}
    For every environment and $\eps > 0$, there exists an (extreme) abstraction of the \emph{binarized version} of the environment such that any optimal policy of any surrogate MDP is $\frac{\eps}{1-\g}$-optimal in the original environment. The size of the state-space of the surrogate MDP is uniformly bounded for every environment as
    \beq\label{eq:first}
    S \leq \frac{16(\ceil{\log_2 A} + 1 - \g)^3}{\g \eps (1-\g)^2}
    \eeq
    which simplifies to
    \beq\label{eq:second}
    S \leq 17\ceil{\log_2 A}^3 \eps\inv (1-\g)^{-2}
    \eeq
    when $\g \geq 0.99$ for any $A$ sized action-space.For large action-spaces, $\g \geq 0.95$ is sufficient for the simplified bound.
\end{theorem}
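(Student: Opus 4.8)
The plan is to derive the statement as a corollary of \Cref{thm:state-bound}, applied not to the environment itself but to its \emph{binarized version} in the sense of \citet{Majeed2020}, and then to carry the binarization parameters back to the original environment. Concretely, I would first wrap the environment in the binary-mock of \citet{Majeed2020}: each original action $a \in \A$ is serialized into $k := \ceil{\log_2 A}$ binary decisions, so that the agent now interacts with a two-action binarized environment, while the percepts and rewards produced by the original environment are buffered and released in lock-step with the serialization. Since one original transition now occupies $k$ binary sub-steps, the discount-factor of the binarized environment must be $\tilde\g := \g^{1/k}$ (so that $\tilde\g^{k} = \g$); and, depending on which sub-step of a block carries the buffered reward, the optimal value of the binarized environment equals that of the original environment up to a fixed multiplicative factor lying in $[\g, 1]$. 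From \citet{Majeed2020} I would borrow exactly two facts: (i) this value correspondence between the two environments, and (ii) that decoding any optimal policy of any surrogate MDP of an abstraction of the binarized environment back into original actions yields, in the original environment, a policy whose optimality-gap equals the binarized optimality-gap divided by that same factor.

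With this in hand, I would invoke \Cref{thm:state-bound} on the binarized environment, which has two actions and discount $\tilde\g$, targeting accuracy $\tilde\eps/(1-\tilde\g)$ in it. \Cref{thm:state-bound} then supplies a surrogate MDP with at most $(1+3\tilde\g)\cdot 2 \cdot 2^{2-1}/(\tilde\eps(1-\tilde\g)^{2})$ states --- that is, at most $4(1+3\tilde\g)/(\tilde\eps(1-\tilde\g)^{2})$ states, since $A\cdot 2^{A-1}=4$ at $A=2$ --- whose uplifted policy is $\tilde\eps/(1-\tilde\g)$-optimal in the binarized environment. Using fact (ii) to convert this guarantee into the desired $\eps/(1-\g)$-optimality in the original environment fixes $\tilde\eps$ to be $\eps$ multiplied by a factor of order $\g(1-\tilde\g)/(1-\g)$, and substituting gives
\beq
S \;\le\; \frac{4\,(1+3\tilde\g)}{\tilde\eps\,(1-\tilde\g)^{2}}
\;=\; \frac{4\,(1+3\tilde\g)\,\tilde\g\,(1-\g)}{\g\,\eps\,(1-\tilde\g)^{3}}
\;\le\; \frac{16\,(1-\g)}{\g\,\eps\,(1-\tilde\g)^{3}}\,.
\eeq
So the exponent on $\log A$ becomes a cube: \Cref{thm:state-bound} already carries $(1-\tilde\g)^{-2}$, and the extra power of $1-\tilde\g$ appears because $\tilde\eps$ has to be deflated by $\propto(1-\tilde\g)$ to compensate for the weaker per-step discount of the binarized process, while the surviving $\g^{-1}$ is the reciprocal of the value-rescaling factor from fact (i).

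It then remains only to lower-bound $1-\tilde\g = 1 - \g^{1/k}$. Writing $\g^{1/k}=e^{-u}$ with $u=-(\ln\g)/k>0$ and using the elementary inequalities $1-e^{-u}\ge u/(1+u)$ and $-\ln\g\ge 1-\g$, I obtain $1-\tilde\g \ge (1-\g)/(k+1-\g)$, hence $(1-\tilde\g)^{-3}\le (k+1-\g)^{3}/(1-\g)^{3}$; plugging this into the display yields \eqref{eq:first}. Finally, for $\g\ge 0.99$ one has $1/\g\le 1.02$ and $(\ceil{\log_2 A}+1-\g)^{3}\le 1.04\,\ceil{\log_2 A}^{3}$, so $16/\g$ times this inflation factor stays below $17$, giving \eqref{eq:second}; and since the inflation factor only shrinks as $\ceil{\log_2 A}$ grows, $\g\ge 0.95$ already suffices once the action-space is moderately large.

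The step I expect to be the real obstacle is the binarization bookkeeping of the first paragraph, not any of the estimates: one must place the buffered reward at the right sub-step so that the discounted return of the binarized process is a faithful copy --- up to the known $\g$-factor --- of the original return, and one must apply the uplifting guarantee of \citet{Majeed2020} at the level of \emph{surrogate} MDPs of the binarized abstraction, so that it composes cleanly with the ``any optimal policy of any surrogate MDP'' phrasing of \Cref{thm:state-bound}. Once that reduction is set up correctly, the rest is substitution plus the one-line bound on $1-\g^{1/k}$.
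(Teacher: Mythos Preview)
Your proposal is correct and follows essentially the same route as the paper: binarize via \citet{Majeed2020} with $\tilde\g=\g^{1/k}$, apply \Cref{thm:state-bound} at $A=2$ (the paper equivalently invokes \Cref{thm:policy-uplift} together with \eqref{eq:max-states}), choose $\tilde\eps$ so that \citet[Theorem~4.6]{Majeed2020} converts the binarized guarantee into $\eps/(1-\g)$-optimality in the original environment, and then bound $(1-\tilde\g)^{-3}$. Your explicit inequality $1-\g^{1/k}\ge(1-\g)/(k+1-\g)$ is exactly the content of the paper's step~$(b)$, which is cited there from \citet[Equation~(23)]{Majeed2020}, and your numerical simplification for $\g\ge 0.99$ matches the paper's $\delta$-argument.
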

\begin{proof}
    Let $\lambda \coloneqq \g^{1/d}$ be the discount-factor in the binarized environment, where each action is represented by $d = \ceil{\log_2 A}$ bits. We start by noting that any $\frac{\lambda^{d-1}\eps}{1-\lambda^d}$-optimal policy in the binarized version of the environment is $\frac{\eps}{1-\g}$-optimal in the original environment \cite[Theorem 4.6]{Majeed2020}. So, we need to find $\frac{\lambda^{d-1}\eps}{1-\lambda^d}$-optimal policy of the binarized environment through a surrogate MDP by using an abstraction. By \Cref{thm:policy-uplift}, we know that an $\eps'$-VADP abstraction leads to a $\frac{(1+3\lambda)\eps'}{(1-\lambda)^2}$-optimal policy in the binarized environment. Hence by equating $\frac{\lambda^{d-1}\eps}{1-\lambda^d}$ with $\frac{(1+3\lambda)\eps'}{(1-\lambda)^2}$, we get
    \beq
    \eps' = \frac{\lambda^{d-1}(1-\lambda)^2\eps}{(1-\lambda^d)(1+3\lambda)}
    \eeq
    which implies an $\frac{\eps}{1-\g}$-optimal policy in the original environment. Therefore,
    \beq
    S \overset{(a)}{=} \left\lfloor\frac{(1-\g)(1+3\lambda)}{\lambda^{d-1}(1-\lambda)^3\eps}\right\rfloor \times 2 \times 2^{2-1}
    \overset{(b)}{\leq} \frac{4(1-\g)(d + 1 -\g)^3(1+3\lambda)}{\lambda^{d-1}(1-\g)^3\eps}
    \overset{(c)}{\leq} {\frac{4(d + 1 -\g)^3(1+3\lambda)}{(1-\g)^2\g\eps} \overset{(d)}{\leq} \frac{16(d + 1 -\g)^3}{(1-\g)^2\g\eps}}
    \eeq
    where $(a)$ is \Cref{eq:max-states} with $\eps$ replaced by $\eps'$, $(b)$ follows the same steps as in \citet[Equation (23)]{Majeed2020}, $(c)$ is due to $\lambda^{d-1}\eps \geq \g\eps$, and $(d)$ holds because $\g, \lambda \leq 1$, which is the bound in \Cref{eq:first}. Now, we further simplify the bound for large $\g$.
    Let $\delta \coloneqq 1-\g \leq 0.3$ then
    \beq
    \frac{16(d + 1 -\g)^3}{\g}
    = {\frac{16(d + \delta)^3}{1-\delta} = 16d^3\fracp{(1 + \delta/d)^3}{1-\delta}}
    \overset{(e)}{\leq} {16d^3\fracp{1 + 4\delta}{1-\delta} \overset{(f)}{\leq} 17d^3}
    \eeq
    where $(e)$ holds because $d \geq 1$ and $\delta \leq 0.3$, and $(f)$ follows from simple algebra when $\delta \leq \frac{1}{81}$ which implies $\g \geq 0.99$.

\end{proof}

This concludes the main contributions of this work.

\section{Conclusions}

In this work, we considered the problem of modeling any history-based problem by a fixed-sized non-Markovian abstraction while still retaining the ability to plan using a surrogate MDP. We proved that there exist non-Markovian abstractions (\Cref{def:vadp}) which can model \emph{any} problem in $O(\eps^{-1} \cdot (1-\g)^{-2} \cdot A \cdot 2^A)$ number of states (\Cref{thm:state-bound}), which is a significant improvement over the previously known upper bound (\Cref{thm:esa-bound}). Furthermore, we get an even tighter upper bound of $O(\eps^{-1} \cdot (1-\g)^{-2} \cdot \log^3 A)$ (\Cref{thm:bin-state-bound}) if we do action-sequentialization.

Our results can help guide the abstraction/representation learning methods \cite{Bengio2013,Hutter2009,Maillard2011}. This work provides the sufficient conditions on the size and nature of the state-space. Therefore, any architectural search algorithm, e.g. representation/meta learning \cite{Finn2017,Bengio2013}, can focus the search in a VADP-like model class with a maximum number of states proved in this work. Moreover, our work helps weaken the typical assumption of the true model being in the class \cite{Maillard2011,Hutter2009} to a much smaller class of models, \emph{cf.} the assumption that the true model is in an MDP class requires a huge model class before it even starts to approximate any history-based process \cite{Powell2011}.

The bound in \Cref{thm:bin-state-bound} might be the tightest bound possible without loosing the ability to use a surrogate MDP for planning. Providing an ideally matching lower bound is a key future direction of this work. Moreover, it is interesting to see if the bound of \Cref{thm:bin-state-bound} is achievable without action-sequentialization.

\iffinal
\section{Acknowledgments}

\iflong We thank \emph{Joe Smith} and \emph{John Smith} for
proofreading earlier drafts and the anonymous reviewers for their valuable feedbacks.\fi \ This work has been supported by Australian Research Council grant DP150104590.
\fi


\begin{small}
\ifnatbib\bibliography{\bibfile}\else\printbibliography\fi
\end{small}

\end{document}